\theoremstyle{plain}
\theoremstyle{definition}
\newtheorem{definition}{Definition}[section]
\newtheorem{theorem}{Theorem}[section]
\newtheorem{proposition}{Proposition}[section]
\newtheorem{problem}{Problem}[section]
\newcommand{\supp}{\mathrm{supp}} 
\renewcommand{\epsilon}{\varepsilon}
\newcommand{\eps}{\epsilon}
  \newcommand{\beq}{\begin{equation}}
  \newcommand{\eeq}{\end{equation}}
  \newcommand{\beqn}{\begin{equation*}}
  \newcommand{\eeqn}{\end{equation*}}
  \newcommand{\beqr}{\begin{eqnarray}}
  \newcommand{\eeqr}{\end{eqnarray}}
  \newcommand{\beqrn}{\begin{eqnarray*}}
  \newcommand{\eeqrn}{\end{eqnarray*}}
  \newcommand{\bmline}{\begin{multline}}
  \newcommand{\emline}{\end{multline}}
  \newcommand{\bmlinen}{\begin{multline*}}
  \newcommand{\emlinen}{\end{multline*}}
\newcommand*{\email}[1]{\texttt{#1}}
\title{Differentially Private n-gram Extraction}
\author[1]{Kunho Kim}
\author[2]{Sivakanth Gopi}
\author[2]{Janardhan Kulkarni}
\author[2]{Sergey Yekhanin}
\affil[1]{Microsoft\\\email{kunho.kim@microsoft.com}}
\affil[2]{Microsoft Research\\\email{\{sigopi,jakul,yekhanin\}@microsoft.com}}
\def\notes{1}
\newcommand{\gnote}[1]{\ifnum\notes=1{{\sf\color{blue} [Gopi: #1]}}\fi}
\begin{document}

\maketitle

\begin{abstract}
We revisit the problem of $n$-gram extraction in the differential privacy setting. In this problem, given a corpus of private text data, the goal is to release as many $n$-grams as possible while preserving user level privacy. Extracting $n$-grams is a fundamental subroutine in many NLP applications such as sentence completion, response generation for emails etc. The problem also arises in other applications such as \emph{sequence mining}, and is a generalization of recently studied differentially private set union (DPSU).
In this paper, we develop a new differentially private algorithm for this problem which, in our experiments, significantly outperforms the state-of-the-art. Our improvements stem from combining recent advances in DPSU, privacy accounting, and new heuristics for pruning in the tree-based approach initiated by Chen et al. (2012)~\cite{chen2012differentially}.
\end{abstract}

\section{Introduction}
We revisit the problem of $n$-gram extraction in the differential privacy setting.
In this problem, we are given a set of $N$ users, and each user has some text data, which can be a collection of emails, documents, or conversation history.
An $n$-gram is any sequence of $n$ consecutive words that appears in the text associated with some user.
For example, suppose there are two users $u_1$ and $u_2$, and they have texts ``Serena Williams is a great tennis player'' and ``Erwin Schrodinger wrote a book called What is Life''.
Then, `great tennis player' is a 3-gram as it appears in the text of $u_1$, `book called What is Life' is a valid 5-gram as it appears in the text of the $u_2$.
On the other hand, `tennis player Serena' is not a 3-gram as that sequence does not appear in the text of either of the users. Similarly `wrote called What' is not a 3-gram as it does not appear as a contiguous subsequence of either text.
Our goal is to extract as many $n$-grams\footnote{In some papers, an $n$-gram model specifically refers to a probabilistic prediction model based on Markov chains. We do not make any probabilistic assumptions on how user data is generated.} as possible, of all different lengths up to some maximum length $T$, while guaranteeing differential privacy. 

Our motivation to study this question comes from its applications to Natural Language Processing (NLP) problems.
The applications such as suggested replies for e-mails and dialog systems rely on the discovery of $n$-grams, and then training a DNN model to rank them \cite{HLLC14, KannanK16, ChenL19, DebB19}. 
However, $n$-grams used for training come from individuals, and may contain sensitive information such as social security numbers, medical history, etc. 
Users may be left vulnerable if personal information is revealed (inadvertently) by an NLP model. 
For example, a model could complete a sentence or predict the next word that can potentially reveal personal information of the users in the training set \cite{CarliniL19}. 
Therefore, algorithms that allow the public release of the $n$-grams while preserving privacy are important for NLP models that are trained on the sensitive data of users.

In this paper we study private $n$-gram extraction problem using the rigorous notion of differential privacy (DP), introduced in the seminal work of Dwork et al. \cite{DMNS06}.

\begin{definition}[Differential Privacy \cite{DworkR14}]
	A randomized algorithm $\mathcal{A}$ is  ($\epsilon$,$\delta$)-differentially private if for any two neighboring databases $D$ and $D'$, which differ in exactly the data pertaining to a single user, and for all sets $\mathcal{S}$ of possible outputs: 
$$
\textstyle{\Pr[\mathcal{A}(D) \in \mathcal{S}] \leq e^{\epsilon}\Pr[\mathcal{A}(D') \in \mathcal{S}] +\delta.}
$$
\end{definition}

We consider the $n$-gram extraction problem guaranteeing user level differential privacy.
We formalize the problem as follows.
Let $\Sigma$ be some vocabulary set of words. Define $\Sigma^k=\Sigma \times \Sigma \times \dots \times\Sigma \text{ ($k$ times)}$ to be the set of length $k$ sequences of words from $\Sigma$, elements of $\Sigma^k$ are called $k$-grams. Let $\Sigma^*=\cup_{k\ge 0} \Sigma^k$ denote arbitrary length sequences of words from $\Sigma$, an element $w\in \Sigma^*$ is called \emph{text}. If $w=a_1a_2\dots a_m$ where $a_i\in \Sigma$, any length $k$ \emph{contiguous subsequence} $a_ia_{i+1}\dots a_{i+k-1}$ of $w$ is called a $k$-gram present in $w$. The set of all $k$-grams inside a text $w$ are denoted by $G_k(w)$ and we denote by $G(w)=\cup_k G_k(w)$ the set of all $n$-grams of all lengths inside $w.$

\begin{problem}[DP $n$-gram Extraction (DPNE)]
	Let $\Sigma$ be some vocabulary set, possibly of unbounded size and let $T$ be the maximum length of $n$-grams we want to extract. Suppose we are given a database $D$ of users where each user $i$ has some text $w_i\in \Sigma^*$. Two such databases are adjacent if they differ in exactly 1 user. We want an ($\epsilon$,$\delta$)-differentially private algorithm $A$ which outputs subsets $S_1,S_2,\dots,S_T$, where $S_k\subset \Sigma^k$, such that the size of each $S_k$ is as large as possible and $S_k\setminus \cup_i G_k(w_i)$ is as small as possible.
\end{problem}

Note that in our formulation, we assign the same weight to $n$-grams irrespective of their length; that is, both a 8-gram and a 2-gram carry equal weight of 1.  
While one can study weighted generalizations of the DPNE problems, both from an algorithm design perspective and the application of DPNE to NLP problems, our formulation captures the main technical hurdles in this space.

\medskip

Many variants of $n$-gram discovery problems, closely related to DPNE, have been studied in the literature \cite{chen2012differentially, xu2015differentially, xu2016differentially, wang2018privtrie}.
\cite{chen2012differentially} study this problem assuming a certain probabilistic Markov chain model of generating $n$-grams.
\cite{xu2015differentially, xu2016differentially} study the problem of mining frequent sequences.
Another set of problems closely related to DPNE are mining or synthesizing trajectory data; see \cite{he2015dpt, chen2012differentially} and references there in.
While we build upon some of the ideas in these works, to the best of our knowledge, the specific version of $n$-gram extraction problem formalized in DPNE has not been studied before.  

A special case of DPNE problem, recently introduced by Gopi {et al.} \cite{gopi2020differentially}, is called Differentially Private Set Union (DPSU).
In this problem, we are given a possibly unbounded universe of elements, and each user holds a subset of these elements. 
The goal is to release the largest possible subset of the union of elements held by the users in a differentially private way. 
This problem can be considered simply as extracting $1$-grams. Another way to relate the problems is to assume that every possible $n$-gram as a separate item in the DPSU problem.
While these interpretations do imply that one can use the algorithms designed for DPSU to solve DPNE, the algorithms for DPSU fail to exploit the inherent structure of our new problem.
In particular, note that if an algorithm for DPNE releases an $n$-gram of size 8,  then one could extract {\em all} possible subgrams without any privacy cost by simply performing a post processing operation on the output of the algorithm.
This structure is at the heart of the DPNE problem,  and algorithms for DPSU do not take into account this.
Not surprisingly, they do not give good utility as demonstrated in our experiments.

\begin{figure}[h]
\centering
\includegraphics[width=0.7\linewidth]{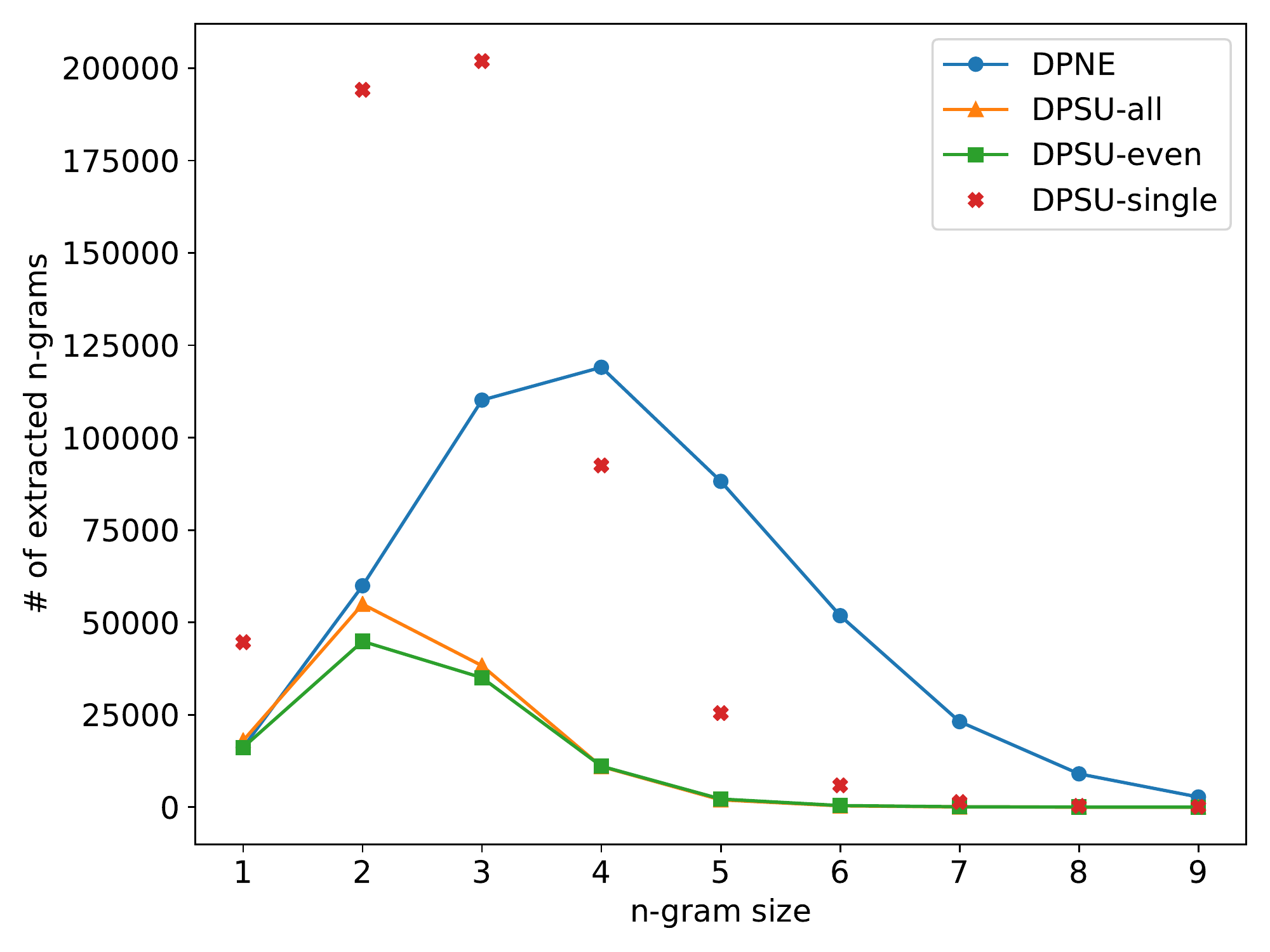}
\caption{The figure illustrates the performance of DPNE algorithm compared to various ways one can apply the DPSU algorithm for $n$-gram extraction. Here `DPSU-all' refers to running DPSU on all the different length $n$-grams together. `DPSU-even' refers to splitting the privacy budget evenly and running DPSU to learn $k$-grams separately for each $k$. `DPSU-single' refers to spending all the privacy budget to learn $k$-grams for a single $k$. Note that for large $k$, DPNE learns many more $k$-grams than DPSU even when DPSU uses all its privacy budget to learn just $k$-grams for that particular value of $k.$ Here $\eps=4,\delta=10^{-7}.$}
\label{fig:dpsucomp}
\end{figure}

\paragraph{Our Contributions}
In this work we design new algorithms for the DPNE problem. The main contributions of the work are:
\begin {itemize}
\item By combining ideas from the recent DPSU work with the tree based approach of \cite{chen2012differentially}, we develop new differentially private algorithms to solve the DPNE problem. 
We also show an efficient implementation of our algorithm via an implicit histogram construction. 
Moreover, our algorithms  can be easily implemented in the MAP-REDUCE framework, which is an important consideration in real-world systems.

\item Using a Reddit dataset,  we show that our algorithms significantly improve the size of output $n$-grams compared to direct application of DPSU algorithms.  
Our experiments show that DPNE algorithms extract more longer $n$-grams compared to DPSU even if the DPSU algorithm spent all its privacy budget on extracting $n$-grams of a particular size (see Figure \ref{fig:dpsucomp}).
Moreover, we recover most of the long $n$-grams which are used by at least 100 users in the database (see Figure~\ref{fig:kanon_pruning}).
\end{itemize}

Our algorithms have been used in industry to make a basic subroutine in an NLP application
differentially private.
\section{An Algorithm for DPNE}
In this section we describe our algorithm for DPNE. The pseudocode is presented in Algorithm~\ref{alg:DPNE}.
\begin{algorithm}[!h]
 \caption{Algorithm for differentially private $n$-gram extraction}
 \label{alg:DPNE}

   \KwIn{A set of $N$ users where each user $i$ has some text $w_i$.
  $T$: maximum length of ngrams to be extracted\\
 $\Delta_1,\Delta_2,\dots,\Delta_T$: maximum contribution parameters\\
 $\rho_1,\rho_2,\dots,\rho_T$: Threshold parameters\\
 $\sigma_1,\sigma_2,\dots,\sigma_T$: Noise parameters.}
   \KwOut{$S_1,S_2,\dots,S_T$ where $S_k$ is a set of $k$-grams}
   
   \tcp{Run DPSU to learn 1-grams}
   $S_1 \leftarrow$ Run DPSU with \textsf{weighted gaussian} update policy using $\Delta_1,\rho_1,\sigma_1$ to get a set of $1$-grams \;
   $V_1 \leftarrow S_1$\;

   \tcp{Iteratively learn $k$-grams}
   \For{$k=2$ {\bfseries to} $T$}{
      
	   $V_k \leftarrow (S_1 \times S_{k-1}) \cap (S_{k-1}\times S_1)$    	\tcp*{Calculate valid $k$-grams}
      
      \tcp{Add all the valid $k$-grams with weight $0$ to a histogram $H_k$}
      \For{$u$ in $V_k$}{
         $H_k[u] \leftarrow 0$\;
      }

	   \tcp{Build a weighted histogram using \textsf{weighted gaussian} policy}
      \For{$i=1$ {\bfseries to} $N$}{
      	$W_i^k \leftarrow G_k(w_i)$ \tcp*{Set of $k$-grams in text $w_i$}
   		$U_i \leftarrow W_i^k \cap V_k$ \tcp*{Prune away invalid $k$-grams}
   		\tcp{Limit user contributions}
		   \If{$|U_i|>\Delta_k$}{
		  		$U_i \leftarrow$ Randomly choose $\Delta_k$ items from $U_i$\;
		  	}

		   \For{$u$ in $U_i$}{
				$H_k[u] \leftarrow H_k[u] + \frac{1}{\sqrt{|U_i|}}$\;
		   }
		}

		\tcp{ Add noise to $H_k$ and output $k$-grams which cross the threshold $\rho_k$}
	   $S_k = \{\} $ (empty set)\;
	   
	   \For{$u \in H_k$}{
		   \If{$H_k[u]+N(0, \sigma_k^2)>\rho_k$}{
		   	$S_k\leftarrow S_k\cup \{u\}$\;
		   }
	   }
      
   }
   Output $S_1,S_2,\dots,S_T$\;
\end{algorithm}
\noindent The algorithm iteratively extracts $k$-grams for $k=1,2,\dots,T$, i.e., the algorithm uses the already extracted $(k-1)$-grams to extract $k$-grams. Let $S_k$ denote the extracted set of $k$-grams. 
The main features of the algorithm are explained below.
\paragraph{Build a vocabulary set (1-grams) using DPSU:}
As the first step, we use the DPSU algorithm from~\cite{gopi2020differentially} to build a vocabulary set (1-grams), i.e., $S_1$. One can use any update policy from~\cite{gopi2020differentially} in the DPSU algorithm. We use the \textsf{weighted gaussian} policy because it is simple and it scales well for large datasets. The DPSU algorithm with \textsf{weighted gaussian} update policy is presented in the Appendix~\ref{sec:DPSU} (Algorithm~\ref{alg:1-grams}) for reference.

\paragraph{DPSU-like update policy in each iteration:}
In each iteration, we build a histogram $H_k$ on $k$-grams using the \textsf{weighted gaussian} update policy from~\cite{gopi2020differentially}.\footnote{One can use any update policy from~\cite{gopi2020differentially}, we use \textsf{weighted gaussian} because of its simplicity and that it scales well to large datasets. The noise added will depend on the particular update policy chosen.} If we run DPSU as a blackbox again to learn $k$-grams (not making use of $S_{k-1}$), then algorithm will perform very poorly. This is because the number of unique $k$-grams grows exponentially with $k$. This causes the user budget to get spread too thin among the exponentially many $k$-grams. Therefore very few $k$-grams will get enough weight to get output by the DPSU algorithm. To avoid this we will \emph{prune} the search space for $S_k.$

\paragraph{Pruning using valid $k$-grams:}
Let us imagine that $S_k$ is the set of ``popular'' $k$-grams, i.e., it occurs in the text of many users. Then for a $k$-gram to be popular, both the $(k-1)$-grams inside it have to be popular. If we approximate the popular $(k-1)$-grams with $S_{k-1}$, the set of extracted $(k-1)$-grams, then we can narrow down the search space to $V_k = (S_{k-1}\times S_1) \cap (S_1 \times S_{k-1})$.
This set $V_k$ is called the set of valid $k$-grams. Therefore when we build the histogram $H_k$ to extract $k$-grams, we will throw away any $k$-grams which do not belong to $V_k$ (this is the \emph{pruning} step). Pruning significantly improves the performance of the algorithm as shown in the experiments (see Section~\ref{sec:experiments}).
Pruning also results in the following nice property for the output of Algorithm~\ref{alg:DPNE}.
\begin{proposition}
 The output of Algorithm~\ref{alg:DPNE}, $S_1\cup S_2 \cup \dots \cup S_T$, is downward closed w.r.t taking subgrams. In particular, we cannot improve the output of the algorithm by adding subgrams of the output to itself.
\end{proposition}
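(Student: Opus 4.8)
The plan is to prove downward closure by strong induction on the gram length $k$, with the crucial leverage being the pruning step. The key observation I would isolate first is that the algorithm only ever writes into the histogram $H_k$ at keys lying in the valid set $V_k$: the initialization loop sets $H_k[u]\leftarrow 0$ only for $u\in V_k$, and each user update touches only $U_i = W_i^k\cap V_k\subseteq V_k$. Since $S_k$ is selected from the keys of $H_k$, this gives the containment $S_k\subseteq V_k=(S_{k-1}\times S_1)\cap(S_1\times S_{k-1})$. Unpacking the definition of $V_k$, any $u=a_1a_2\cdots a_k\in S_k$ therefore has its length-$(k-1)$ prefix $a_1\cdots a_{k-1}$ in $S_{k-1}$ (from the factor $S_{k-1}\times S_1$) and its length-$(k-1)$ suffix $a_2\cdots a_k$ in $S_{k-1}$ (from the factor $S_1\times S_{k-1}$). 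This single fact is the engine of the whole argument.

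For the induction itself, I would formulate the hypothesis as: for every $j<k$ and every $u\in S_j$, all contiguous subsequences (subgrams) of $u$ of length $\ell$ lie in $S_\ell$. The base case $k=1$ is immediate, since a $1$-gram is its own only subgram and lies in $S_1$. For the inductive step, take $u=a_1\cdots a_k\in S_k$ and let $v=a_i\cdots a_j$ be any subgram. If $i=1$ and $j=k$ then $v=u\in S_k$ and we are done. Otherwise $v$ is a \emph{proper} subgram, so either $j\le k-1$, in which case $v$ is a contiguous subsequence of the prefix $a_1\cdots a_{k-1}\in S_{k-1}$, or $i\ge 2$, in which case $v$ is a contiguous subsequence of the suffix $a_2\cdots a_k\in S_{k-1}$. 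In either case the inductive hypothesis applied to the relevant $(k-1)$-gram places $v$ in the appropriate $S_\ell$. Hence every subgram of $u$ lies in the output, completing the induction.

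The only step requiring genuine care — and the place I would expect a careless proof to slip — is the combinatorial claim that every proper subgram of $u$ is captured by either the prefix or the suffix $(k-1)$-gram; this is what reduces the statement to the inductive hypothesis rather than forcing a more intricate case analysis over all starting positions. The verification above (a proper subgram must omit either the last letter or the first letter) is elementary but should be stated explicitly. Finally, the ``in particular'' clause is an immediate corollary: if the output $S_1\cup\cdots\cup S_T$ already contains every subgram of each of its elements, then post-processing by adjoining subgrams introduces nothing new, so it cannot enlarge the output.
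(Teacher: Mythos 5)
Your proof is correct and follows essentially the same route as the paper's: both reduce the claim to the containment $S_k\subseteq V_k=(S_1\times S_{k-1})\cap(S_{k-1}\times S_1)$ guaranteed by the pruning step, and both run the induction via the observation that every proper subgram of $a_1\cdots a_k$ sits inside either the prefix $a_1\cdots a_{k-1}$ or the suffix $a_2\cdots a_k$, each of which lies in $S_{k-1}$. The only (harmless) difference is that the paper states the two-way equivalence between downward closure and the containments $S_k\subseteq V_k$, whereas you prove only the direction actually needed while making the derivation of $S_k\subseteq V_k$ from the algorithm explicit.
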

\begin{proof}
	We will prove that any collection of ngrams $S = S_1\cup S_2 \cup \dots \cup S_T$ (where $S_k$ are $k$-grams) is downward closed iff $S_k \subset (S_1\times S_{k-1}) \cap (S_{k-1}\times S_1)$ for all $k.$ 
	One direction is obvious, if $S$ is downward closed then $S_k \subset (S_1\times S_{k-1}) \cup (S_{k-1}\times S_1)$ because if $a_1a_2\dots a_k \in S$, then $a_1a_2\dots a_{k-1}, a_2a_3\dots a_k \in S_{k-1}$ and $a_1,a_2,\dots,a_k\in S_1$. 
	The other direction can be proved by induction on $T$. Suppose $S_k\subset (S_1\times S_{k-1}) \cap (S_{k-1}\times S_1)$ for every $k$. By induction $S_1\cup S_2 \cup \dots \cup S_{T-1}$ is downward closed. If $w=a_1a_2\dots a_T\in S_T$, then any subgram of $w$ is either a subgram of $a_1a_2\dots a_{T-1}\in S_{T-1}$ or a subgram of $a_2a_3\dots a_T\in S_{T-1}$. By induction, that subgram has to lie in $S_1\cup S_2 \cup \dots \cup S_{T-1}.$
\end{proof}

\paragraph{Controlling spurious $n$-grams using $\rho_k$:}
In our privacy analysis (Section~\ref{sec:privacy_analysis}), we will show that the privacy of the DPNE algorithm depends only on $\rho_1$ and $\sigma_1,\sigma_2,\dots,\sigma_T$. In particular, $\rho_2,\dots,\rho_T$ do not affect privacy. Instead, they are used to control the number of \emph{spurious} ngrams that we extract, i.e., ngrams which are not actually used by any user but output by the algorithm. 
\begin{proposition}
	\label{prop:spurious_kgrams}
   For $k\ge 2,$ the expected number of spurious $k$-grams output by Algorithm~\ref{alg:DPNE} is at most $|V_k|(1-\Phi(\rho_k/\sigma_k))$ where $\Phi$ is the Gaussian CDF. And the algorithm will not output any spurious $1$-grams.
\end{proposition}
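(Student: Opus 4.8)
The plan is to track exactly what weight a spurious $k$-gram can accumulate in the histogram $H_k$ and then read off its output probability. First I would fix $k \ge 2$ and note that the only keys that $H_k$ ever indexes are the valid $k$-grams in $V_k$, since the histogram is initialized (via the loop ``\textbf{for} $u$ in $V_k$'') only over $V_k$, and the output loop iterates over $u \in H_k$. Consequently the set of spurious $k$-grams that are even eligible to be output is contained in $V_k$, so its cardinality is at most $|V_k|$ (the exact count being $|V_k \setminus \cup_i G_k(w_i)|$).

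Next I would compute the final histogram value of an arbitrary spurious $k$-gram $u$, i.e.\ one with $u \notin \cup_i G_k(w_i)$. The only increments to $H_k$ occur in the inner user loop, where the keys touched lie in $U_i \subseteq W_i^k = G_k(w_i)$; that is, every increment is keyed to a $k$-gram that genuinely occurs in some user's text. Hence a spurious $u$ receives no contribution from any user, and its value remains at its initialization, $H_k[u] = 0$. The output rule then adds an independent draw of $N(0,\sigma_k^2)$ and thresholds at $\rho_k$, so the probability that $u$ is output is $\Pr[N(0,\sigma_k^2) > \rho_k] = \Pr[Z > \rho_k/\sigma_k] = 1 - \Phi(\rho_k/\sigma_k)$, where $Z$ is a standard normal. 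Summing this quantity over the at most $|V_k|$ spurious candidates and invoking linearity of expectation (no independence is needed for this step) yields the claimed bound $|V_k|(1 - \Phi(\rho_k/\sigma_k))$. For the $k=1$ case I would simply appeal to the structure of the DPSU subroutine (Algorithm~\ref{alg:1-grams}): it only ever inserts into its histogram elements that some user actually holds, so no spurious $1$-gram is ever a candidate and none can be output.

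This proposition is genuinely elementary, so I do not expect a substantive obstacle; the one point requiring care is the bookkeeping establishing that every histogram increment is keyed to an actually-present $k$-gram, which is what forces spurious entries to stay at weight exactly $0$ and pins their output probability to the clean Gaussian tail $1 - \Phi(\rho_k/\sigma_k)$. I would also make explicit that the factor $|V_k|$ is a loose over-count (replacing spurious candidates by all of $V_k$), which is harmless since we only assert an upper bound.
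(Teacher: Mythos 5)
Your proof is correct and follows essentially the same route as the paper's: a spurious $k$-gram receives no user increments, so it sits at weight $0$ in $H_k$, and its probability of crossing the threshold after adding $N(0,\sigma_k^2)$ noise is exactly $1-\Phi(\rho_k/\sigma_k)$, after which linearity of expectation over the at most $|V_k|$ candidates gives the bound. The paper's proof is a two-line version of this; your write-up just makes the bookkeeping (and the trivial $k=1$ case) explicit.
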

\begin{proof}
   A spurious $k$-gram will have zero weight in the histogram $H_k$ that the algorithm builds. So after adding $N(0,\sigma_k^2)$ noise, the probability that it will cross the threshold $\rho_k$ is exactly $1-\Phi(\rho_k/\sigma_k).$
\end{proof}
Larger we set $\rho_k$, smaller the number of spurious $k$-grams. But setting large $\rho_k$ will reduce the number of non-spurious $k$-grams extracted by the algorithm. So $\rho_2,\dots,\rho_T$ should be set delicately to balance this tension.
One convenient choice of $\rho_k$ for $k\ge 2$ is to set, $$\rho_k = \sigma_k\Phi^{-1}\left(1-\eta\min\left\{1,\frac{|S_{k-1}|}{|V_k|}\right\}\right)$$ for some $\eta \in (0,1).$ This implies that the expected number of spurious $k$-grams output is at most $\eta \min\{|S_{k-1}|,|V_k|\}$ by Proposition~\ref{prop:spurious_kgrams}. And the total number of spurious ngrams output is at most $\eta(|S_1|+|S_2|+\dots+|S_{T-1}|)$. Therefore spurious ngrams output by the algorithm are at most an $\eta$-fraction of all the ngrams output.

\paragraph{Scaling up DPNE:} For extremely large datasets, Algorithm~\ref{alg:DPNE} can become impractical to run. The main difficulty is that the set of valid $k$-grams $V_k=(S_1 \times S_{k-1})\cap (S_{k-1}\times S_1)$ is hard to compute explicitly if $|S_1|$ and $|S_{k-1}|$ are both extremely large. Luckily, we can easily modify the DPNE algorithm to make it scalable. The trick is to never actually compute the set of valid $k$-grams explicitly. Observe it is trivial to check if a given $k$-gram is valid or not (asumming we know $S_1$ and $S_{k-1}$). Thus we implement the pruning step implicitly without computing $V_k$. The next problem is building the histogram $H_k$ on $V_k$. Note that any spurious $k$-gram will have weight $0$ in $H_k$ after all the users update it. So instead of explicitly inserting the spurious $k$-grams into $H_k$ with weight $0$, we implicitly assume that they are present. When we add noise to the histogram and output all the $k$-grams which cross the threshold $\rho_k$, the number of spurious $k$-grams that should have been output follow the binomial distribution $B_k\sim \mathrm{Binomial}\left(|V_k|-|\supp(H_k)|,\Phi(-\rho_k/\sigma_k)\right)$. So we can sample $B_k$ spurious $k$-grams from $V_k \setminus \supp(H_k)$, then we can just add them to the output set $S_k$ at the end. And generating a random sample from $V_k \setminus \supp(H_k)$ is easy. Sample a random element from $w\in S_{k-1}\times S_1$ and output if $w\in (S_{k-1}\times S_1) \setminus \supp(H_k)$, else repeat. Combining these ideas we can implement a scalable version of DPNE which is included in Appendix~\ref{sec:scalableDPNE} (Algorithm~\ref{alg:DPNE_fast_scalable}). Our experiments use this faster and scalable version of DPNE.

\subsection{Privacy Analysis}
\label{sec:privacy_analysis}
To analyse the privacy guarantees of the DPNE algorithm (Algorithm~\ref{alg:DPNE}), we will need a few preliminaries.
\begin{proposition}[\cite{DongRS19_GDP}]
	\label{prop:Gaussian_composition}
	The composition of Gaussian mechanisms with $\ell_2$-sensitivity 1 and noise parameters $\sigma_1,\sigma_2,\dots,\sigma_T$ has the same privacy as a Gaussian mechanism with $\ell_2$-sensitivity 1 and noise parameter $\sigma$ where:
	$$\frac{1}{\sigma^2}=\sum_{i=1}^T \frac{1}{\sigma_i^2}.$$
\end{proposition}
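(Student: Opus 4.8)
The composition of Gaussian mechanisms with $\ell_2$-sensitivity 1 and noise parameters $\sigma_1, \ldots, \sigma_T$ has the same privacy as a single Gaussian mechanism with $\ell_2$-sensitivity 1 and noise parameter $\sigma$ where $1/\sigma^2 = \sum 1/\sigma_i^2$.

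This is cited from Dong-Roth-Su (GDP paper). Let me think about how I'd prove it.

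The plan is to work inside the Gaussian Differential Privacy (GDP) framework of \cite{DongRS19_GDP}, where privacy is measured by the \emph{trade-off function} between the output distributions on neighboring databases. Recall that for two distributions $P,Q$ the trade-off function $T(P,Q):[0,1]\to[0,1]$ sends a type-I error level $\alpha$ to the smallest achievable type-II error of any test distinguishing $P$ from $Q$, and a mechanism is $\mu$-GDP if on every pair of neighbors its trade-off function dominates $G_\mu := T(N(0,1),N(\mu,1))$. The whole proof reduces to two facts: a single Gaussian mechanism is $\mu$-GDP with $\mu=1/\sigma$, and Gaussian trade-off functions tensorize by a root-sum-of-squares rule under composition.

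First I would check the base case: a single Gaussian mechanism with $\ell_2$-sensitivity $1$ and noise parameter $\sigma$ is $(1/\sigma)$-GDP. On neighbors $D,D'$ the outputs are $N(f(D),\sigma^2 I)$ and $N(f(D'),\sigma^2 I)$ with $\norm{f(D)-f(D')}_{\ell_2}\le 1$. By the Neyman--Pearson lemma the optimal test depends only on the mean shift, so after rescaling by $\sigma$ the trade-off function equals $G_{c/\sigma}$ with $c=\norm{f(D)-f(D')}_{\ell_2}\le 1$; since $G_\mu$ is nonincreasing in $\mu$, this dominates $G_{1/\sigma}$. Hence the mechanism is $(1/\sigma)$-GDP, and conversely $(1/\sigma)$-GDP is realized by such a mechanism, so the two privacy profiles coincide.

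Next I would invoke the composition theorem of the GDP framework: the (possibly adaptive) composition of an $f$-DP and a $g$-DP mechanism is $(f\otimes g)$-DP, where $\otimes$ is the tensor product of trade-off functions induced by products of the underlying distributions; applied to our $T$ mechanisms this shows the composition is $(G_{1/\sigma_1}\otimes\cdots\otimes G_{1/\sigma_T})$-DP. The crux is then the tensorization identity $G_{\mu_1}\otimes G_{\mu_2}=G_{\sqrt{\mu_1^2+\mu_2^2}}$. To see it, observe that $G_{\mu_1}\otimes G_{\mu_2}$ is by definition the trade-off function between $N(0,I_2)$ and $N((\mu_1,\mu_2),I_2)$; by rotational invariance of the isotropic Gaussian the optimal likelihood-ratio test depends only on the norm $\sqrt{\mu_1^2+\mu_2^2}$ of the mean shift, which yields exactly $G_{\sqrt{\mu_1^2+\mu_2^2}}$. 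Iterating over the $T$ factors gives $G_\mu$ with $\mu=\sqrt{\sum_i(1/\sigma_i)^2}$. I expect this rotational-symmetry step to be the main (indeed the only) real obstacle; the rest is bookkeeping.

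Finally, set $\sigma$ by $1/\sigma^2=\sum_i 1/\sigma_i^2$, so that $\mu=1/\sigma$. The composition is $(1/\sigma)$-GDP, which by the base case has exactly the trade-off function $G_{1/\sigma}$ of a single Gaussian mechanism with $\ell_2$-sensitivity $1$ and noise $\sigma$; the two therefore have identical privacy, completing the proof.
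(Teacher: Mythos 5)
The paper does not prove this proposition; it is imported verbatim from the cited Gaussian Differential Privacy reference \cite{DongRS19_GDP}. Your argument is correct and is precisely the standard proof from that source: reduce a sensitivity-$1$ Gaussian mechanism to $(1/\sigma)$-GDP via Neyman--Pearson, apply the trade-off-function composition theorem, and use rotational invariance of the isotropic Gaussian to get $G_{\mu_1}\otimes G_{\mu_2}=G_{\sqrt{\mu_1^2+\mu_2^2}}$, so there is nothing to correct.
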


\begin{proposition}[\cite{BalleW18}]
	\label{prop:Gaussian_privacy}
	For any $\eps>0$, the Gaussian mechanism with $\ell_2$-sensitivity 1 and noise parameter $\sigma$ satisfies $(\eps,\delta)$-DP where
	$$\delta = \Phi\left(- \epsilon\sigma + \frac{1}{2\sigma}\right) - e^{\epsilon} \cdot \Phi\left(- \epsilon\sigma - \frac{1}{2\sigma}\right).$$
\end{proposition}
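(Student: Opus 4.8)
The plan is to compute the tight $\delta$ directly from the definition of $(\epsilon,\delta)$-DP by identifying the worst-case adversary and reducing to a one-dimensional Gaussian comparison. Recall that for a mechanism whose output on neighboring databases has densities $p$ and $q$, the smallest $\delta$ for which the privacy guarantee holds is
$$\delta = \sup_{S}\left(\Pr_{p}[S] - e^{\epsilon}\Pr_{q}[S]\right).$$
First I would observe that the supremum is attained by the likelihood-ratio region $S^{*}=\{x: p(x)>e^{\epsilon}q(x)\}$: the integrand $p(x)-e^{\epsilon}q(x)$ of $\Pr_p[S]-e^{\epsilon}\Pr_q[S]$ is nonnegative exactly on $S^{*}$, so modifying $S$ away from $S^{*}$ can only decrease the objective.

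Next I would reduce the problem to one dimension. For neighboring databases the two output distributions are $N(\mu_0,\sigma^2 I)$ and $N(\mu_1,\sigma^2 I)$ with $\norm{\mu_0-\mu_1}_2\le 1$ by the $\ell_2$-sensitivity bound. Because the isotropic Gaussian is rotation invariant, the likelihood ratio depends on the output only through its component along the direction $\mu_1-\mu_0$, so the analysis collapses to the pair of scalar Gaussians $p=N(0,\sigma^2)$ and $q=N(\mu,\sigma^2)$ with $\mu=\norm{\mu_0-\mu_1}_2$. Since the resulting $\delta$ is monotone increasing in the mean gap $\mu$, the worst neighboring pair is $\mu=1$, and it suffices to analyze $p=N(0,\sigma^2)$, $q=N(1,\sigma^2)$.

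Finally I would carry out the scalar computation. The log-likelihood ratio is $\ln(p(x)/q(x)) = (1-2x)/(2\sigma^2)$, so the condition $p(x)>e^{\epsilon}q(x)$ reduces to the half-line $x<\tfrac12-\epsilon\sigma^2$, which is exactly $S^{*}$. Evaluating the Gaussian CDF of this half-line under $p$ and under $q$ gives $\Phi\!\left(\tfrac{1}{2\sigma}-\epsilon\sigma\right)$ and $\Phi\!\left(-\tfrac{1}{2\sigma}-\epsilon\sigma\right)$ respectively; substituting into the expression for $\delta$ above yields the stated formula.

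I expect the reduction to one dimension to be the step requiring the most care: one must justify both the rotational collapse to the scalar problem and the monotonicity of $\delta$ in the mean separation $\mu$, so that taking $\mu=1$ genuinely captures the worst case over all neighboring pairs permitted by the sensitivity bound. Once the optimal rejection region $S^{*}$ is pinned down, the remaining tail integrals are a routine evaluation of the Gaussian CDF.
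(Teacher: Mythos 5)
Your derivation is correct, and it supplies a proof that the paper itself does not contain: Proposition~\ref{prop:Gaussian_privacy} is imported from Balle and Wang \cite{BalleW18} without argument, and your computation is essentially the analytical derivation given in that reference. The identification of the optimal rejection region $S^{*}=\{x:p(x)>e^{\epsilon}q(x)\}$, the collapse to the scalar pair $N(0,\sigma^2)$ versus $N(1,\sigma^2)$, and the evaluation of the two Gaussian tails are all right. Two points you flagged as needing care are in fact easy to discharge and would complete the write-up. First, $(\epsilon,\delta)$-DP requires the bound for both orderings of the neighboring pair, but the map $x\mapsto \mu-x$ exchanges the two scalar Gaussians, so both directions give the same $\delta$. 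Second, the monotonicity of $\delta(\mu)=\Phi\bigl(\tfrac{\mu}{2\sigma}-\tfrac{\epsilon\sigma}{\mu}\bigr)-e^{\epsilon}\Phi\bigl(-\tfrac{\mu}{2\sigma}-\tfrac{\epsilon\sigma}{\mu}\bigr)$ in $\mu$ follows from a one-line computation: writing $a=\tfrac{\mu}{2\sigma}-\tfrac{\epsilon\sigma}{\mu}$ and $b=-\tfrac{\mu}{2\sigma}-\tfrac{\epsilon\sigma}{\mu}$, one has $a^2-b^2=-2\epsilon$, hence $e^{\epsilon}\phi(b)=\phi(a)$ for the standard normal density $\phi$, and the derivative collapses to $\delta'(\mu)=\phi(a)/\sigma>0$. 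With those two observations made explicit, the argument is complete and yields the exact (tight) $\delta$, which is slightly stronger than what the proposition as stated requires.
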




We are now ready to prove the privacy of our DPNE algorithm.
\begin{theorem}
Let $\eps>0$ and $0<\delta<1$. Let $\sigma^*$ be obtained by solving the equation
\begin{equation}
\label{eqn:gaussian_mechanism_eps_delta}
\frac{\delta}{2} = \Phi\left(- \epsilon\sigma^* + \frac{1}{2\sigma^*}\right) - e^{\epsilon} \cdot \Phi\left(- \epsilon\sigma^* - \frac{1}{2\sigma^*}\right)
\end{equation}
where $\Phi$ is the CDF of a standard Gaussian.
Then Algorithm~\ref{alg:DPNE} is $(\eps,\delta)$-DP if we set $\rho_1,\sigma_1,\dots,\sigma_T$ as follows:
\begin{align*}
\frac{1}{\sigma^*} &= \sqrt {\frac{1}{\sigma_1^2} + \frac{1}{\sigma_2^2} +\dots+\frac{1}{\sigma_T^2}},\\
\rho_1 &= \max_{1\le t \le \Delta_1}\left(\frac{1}{\sqrt{t}}+\sigma_1 \Phi^{-1}\left(\left(1-\frac{\delta}{2}\right)^{1/t}\right)\right).
\end{align*}
\end{theorem}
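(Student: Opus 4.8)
The plan is to isolate two independent sources of privacy loss—the Gaussian noise added across all $T$ levels, and the possibility that a \emph{new} $1$-gram (one introduced only by the user on which the two neighboring databases differ) crosses the threshold—and to charge each a budget of $\delta/2$, so that their sum is the claimed $(\eps,\delta)$ guarantee.

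First I would verify that for every $k$ the histogram $H_k$ has $\ell_2$-sensitivity exactly $1$: under the \textsf{weighted gaussian} policy a user touches $|U_i|\le\Delta_k$ coordinates, each incremented by $1/\sqrt{|U_i|}$, so the contributed vector has $\ell_2$-norm $\sqrt{|U_i|\cdot 1/|U_i|}=1$. Next comes the crucial structural observation: for $k\ge 2$ the domain of $H_k$ is exactly $V_k=(S_1\times S_{k-1})\cap(S_{k-1}\times S_1)$, and \emph{all} of $V_k$ is initialized to weight $0$ before any user updates. Hence, once $S_1,\dots,S_{k-1}$ are fixed, inserting or deleting the differing user changes only the weights on the fixed domain $V_k$ and never the domain itself. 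Consequently the thresholding against $\rho_k$ is pure post-processing of a Gaussian mechanism over a fixed domain (the same reasoning covers the spurious-sampling step of the scalable variant), which is precisely why $\rho_2,\dots,\rho_T$ do not enter the privacy guarantee.

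I would then treat the algorithm as an adaptive composition in which round $k$ receives the previously released $S_1,\dots,S_{k-1}$ and runs a Gaussian mechanism of sensitivity $1$ and noise $\sigma_k$ (exactly so for $k\ge 2$; for $k=1$ up to the support issue below), reasoning in the Gaussian-DP language underlying Proposition~\ref{prop:Gaussian_composition}. Conditioned on the good event that no $1$-gram present only in the differing user's text crosses $\rho_1$, the two neighboring executions operate over matching domains at every level, and the whole computation becomes an adaptive composition of Gaussian mechanisms with parameters $\sigma_1,\dots,\sigma_T$. By Proposition~\ref{prop:Gaussian_composition} this is equivalent to a single Gaussian mechanism with $1/\sigma^{*2}=\sum_k 1/\sigma_k^2$, which by Proposition~\ref{prop:Gaussian_privacy} and the defining equation~\eqref{eqn:gaussian_mechanism_eps_delta} for $\sigma^*$ is $(\eps,\delta/2)$-DP.

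It remains to bound the probability of the bad event by $\delta/2$, which is exactly what the choice of $\rho_1$ buys. If the differing user introduces $t\le\Delta_1$ new $1$-grams, each receives weight $1/\sqrt{t}$, so after adding $N(0,\sigma_1^2)$ the probability that a fixed one stays below $\rho_1$ is $\Phi\big((\rho_1-1/\sqrt{t})/\sigma_1\big)$, and the probability that all $t$ stay below is its $t$-th power; the stated bound $\rho_1\ge 1/\sqrt{t}+\sigma_1\Phi^{-1}\big((1-\delta/2)^{1/t}\big)$ forces this to be at least $1-\delta/2$ for every admissible $t$, so some new $1$-gram crosses with probability at most $\delta/2$. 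Adding the two budgets yields $(\eps,\delta)$-DP. The step I expect to be most delicate is making the conditioning in the third paragraph rigorous: the higher-level domains $V_k$ are themselves data-dependent and may differ across the two executions, so I would need to invoke adaptive composition in the Gaussian-DP framework and argue carefully that, on the good event, the level-$1$ output—and hence every subsequent domain—can be coupled to coincide, reducing the entire mechanism to the clean product structure on which Propositions~\ref{prop:Gaussian_composition} and~\ref{prop:Gaussian_privacy} apply.
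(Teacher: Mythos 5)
Your proposal is correct and follows essentially the same route as the paper: split the loss into the Gaussian mechanisms (composed via Proposition~\ref{prop:Gaussian_composition} into a single Gaussian with noise $\sigma^*$, giving $(\eps,\delta/2)$-DP by Proposition~\ref{prop:Gaussian_privacy}) plus a $(0,\delta/2)$ failure event for a new $1$-gram crossing $\rho_1$, combined by basic composition. The only difference is one of detail: you explicitly derive the $\rho_1$ bound and flag the adaptive-composition subtlety for the data-dependent domains $V_k$, whereas the paper defers the former to the DPSU analysis of \cite{gopi2020differentially} and leaves the latter implicit.
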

\begin{proof}
	The privacy of DPSU algorithm (Algorithm~\ref{alg:1-grams}) is given by the composition of a Gaussian mechanism with $\ell_2$-sensitivity 1 and noise $\sigma_1$ composed with $(0,\delta/2)$-algorithm as shown in~\cite{gopi2020differentially} if we set $\rho_1$ as shown. The construction of $k$-grams is a Gaussian mechanism with $\ell_2$-sensitivity 1 and noise $\sigma_k.$ By Proposition~\ref{prop:Gaussian_composition}, the composition of all these mechanisms is the composition of a Gaussian mechanism with noise $\sigma^*$ and a $(0,\delta/2)$-mechanism. Now applying Proposition~\ref{prop:Gaussian_privacy} and using the simple composition theorem for DP and completes the proof.\footnote{The simple composition theorem states that if $M_i$ satisfies $(\eps_i,\delta_i)$-DP for $i=1,2$, then the composition of $M_1,M_2$ satisfies $(\eps_1+\eps_2,\delta_1+\delta_2)$-DP.}
\end{proof}






%

\section{Experiments}
\label{sec:experiments}
In this section, we empirically evaluate the performance of our algorithms on two datasets: Reddit and MSNBC. 
The Reddit data set is a natural language dataset used extensively in NLP applications, and is taken from TensorFlow repository \footnote{https://www.tensorflow.org/datasets/catalog/reddit}. 
The MSNBC dataset consists page visits of users who browsed msnbc.com on September 28, 1999, and is recorded at the level of URL and ordered by time \footnote{https://archive.ics.uci.edu/ml/datasets/msnbc.com+anonymous+web+data}.
This dataset has been used  in the literature in frequent sequence mining problems, which is a closely related problem to ours.
Tables \ref{stat1}, \ref{stat2} summarize the salient properties of these  datasets.
As primary focus of our paper is on NLP applications, we perform more extensive experiments on Reddit dataset, which also happens to be a significantly larger dataset compared to the MSNBC dataset.

\begin{table}[h]
\label{tab:dataset_stats}
\centering
\caption{Dataset Statistics} \label{stat1}
\scalebox{1.0}{\begin{tabular}{cccccc}
\toprule
 & \# users & \# posts & \# sequences/users & \# sequences & \# unique sequences \\
\midrule
Reddit & 1,217,516 & 3,843,330 & 3653.81 & 43.39B & 23.24B \\ 
MSNBC & 989,818 & 989,818 & 18.02 & 17.84M & 357.84K \\ 
\bottomrule
\end{tabular}}
\end{table}

\begin{table}[h]
\label{tab:dataset_stats_kgrams}
\centering
\caption{Average number of sequences per user calculated for each sequence length}\label{stat2}
\scalebox{0.9}{\begin{tabular}{ccccccccccc}
\toprule
 & 1 & 2 & 3 & 4 & 5 & 6 & 7 & 8 & 9 & total  \\
\midrule
Reddit & 497.60 & 470.76 & 444.37 & 418.77 & 393.72 & 369.32 & 345.65 & 322.80 & 300.82 & 3563.81 \\ 
MSNBC & 4.67 & 3.67 & 3.04 & 2.57 & 2.19 & 1.88 & & & & 18.02\\ 
\bottomrule
\end{tabular}}
\end{table}

\subsection{Experiments on Reddit Dataset}
Throughout this section we fix $T=9,\eps=4, \delta=10^{-7},\Delta_1=\dots=\Delta_9=\Delta_0=300,\eta=0.01$ unless otherwise specified.
\subsubsection{Hyperparameter tuning}
There are several parameters in Algorithm~\ref{alg:DPNE} such as $\sigma_k,\Delta_k,\rho_k$. 
We study the effect of important hyperparameters on the number of $n$-grams extracted by our algorithm.

\paragraph{Setting $\rho_k$ and $\eta$:} We have already discussed how to set $\rho_k$. These should be set based on the fraction $\eta$ of the spurious $k$-grams we are willing to tolerate in the output. The effect of $\eta$ on the performance of the algorithm is shown in Figure~\ref{fig:hyperparameters_eta_sigma}. As expected, increasing $\eta$ increases the number of extracted $n$-grams for all lengths.

\paragraph{Setting $\Delta_k$:}
This is quite similar to setting $\Delta_0$ parameter in the DPSU algorithm from~\cite{gopi2020differentially}. A good starting point is to set $\Delta_k$ around the median number of valid $k$-grams that users have in their text. In our experiments we set $\Delta_1=\Delta_2=\dots=\Delta_k=\Delta_0$. The effect of $\Delta_0$ on the performance of the algorithm is shown in Figure~\ref{fig:hyperparameters_delta0_eps}. As predicted the performance of the algorithm improves with increasing $\Delta_0$ until $\Delta_0\approx 300$ which is approximately the average number of $k$-grams per user in the Reddit dataset (see Table~\ref{tab:dataset_stats_kgrams}).

\paragraph{Setting $\sigma_k$:}
As can be observed, each iteration of $k$-gram extraction  for $k= 1, 2,\dots,T$ consumes some privacy budget. 
If we set a small value of $\sigma_k$, we will consume more privacy budget constructing $k$-grams. Since $\frac{1}{\sigma^*} = \sqrt {\frac{1}{\sigma_1^2} + \frac{1}{\sigma_2^2} +\dots+\frac{1}{\sigma_T^2}}$ is fixed based on the final $\eps,\delta$ we want to achieve (see Equation~\ref{eqn:gaussian_mechanism_eps_delta}), we need to balance various $\sigma_k$ accordingly.
Given these observations, how much privacy budget we should spend for each iteration of our algorithm?
A simple strategy is set same value of $\sigma_k$ for all $k$; this corresponds to spending the same privacy budget for extracting  $k$-grams for all values of $k$.

One other heuristic is the following. In any data, we expect that $1$-grams are more frequent than $2$-grams, which should be more frequent than $3$-grams and so on. We can afford to add larger noise in the beginning and add less noise in later stages.
Thus one could set $\sigma_k =c\sigma_{k-1}$ for some $0<c<1$; that is, we decay the noise at a geometric rate. And thus we consume more privacy budget in extracting $k$-grams than $k-1$-grams. The effect of $c$ is shown in Figure~\ref{fig:hyperparameters_eta_sigma}. As expected, spending more privacy budget to extract longer $n$-grams produces more longer $n$-grams at the cost of smaller number of shorter $n$-grams.


\paragraph{Pruning rule:}
Figure~\ref{fig:kanon_pruning} shows effect of using different pruning rules, $V_k=S_{k-1}\times S_1$ (single-side) or $V_k = (S_{k-1}\times S_1) \cap (S_1\times S_{k-1})$ (both-side). 
While both-side pruning is a stricter rule compared to the single-side pruning and we expect to perform better overall, our experiments suggest that the situation is more intricate than our intuition. 
In particular, both rules to lead to similar number of discovered $n$-grams;
However, it is interesting to see the distribution on the length of the output $n$-grams. The both-side pruning rule favors shorter length $n$-grams where as single-side pruning favors the longer length $n$-grams.
We believe that understanding how pruning rules affect the performance of DPNE algorithms is an interesting research direction.

\begin{figure}[h]
\centering
\includegraphics[width=0.45\linewidth]{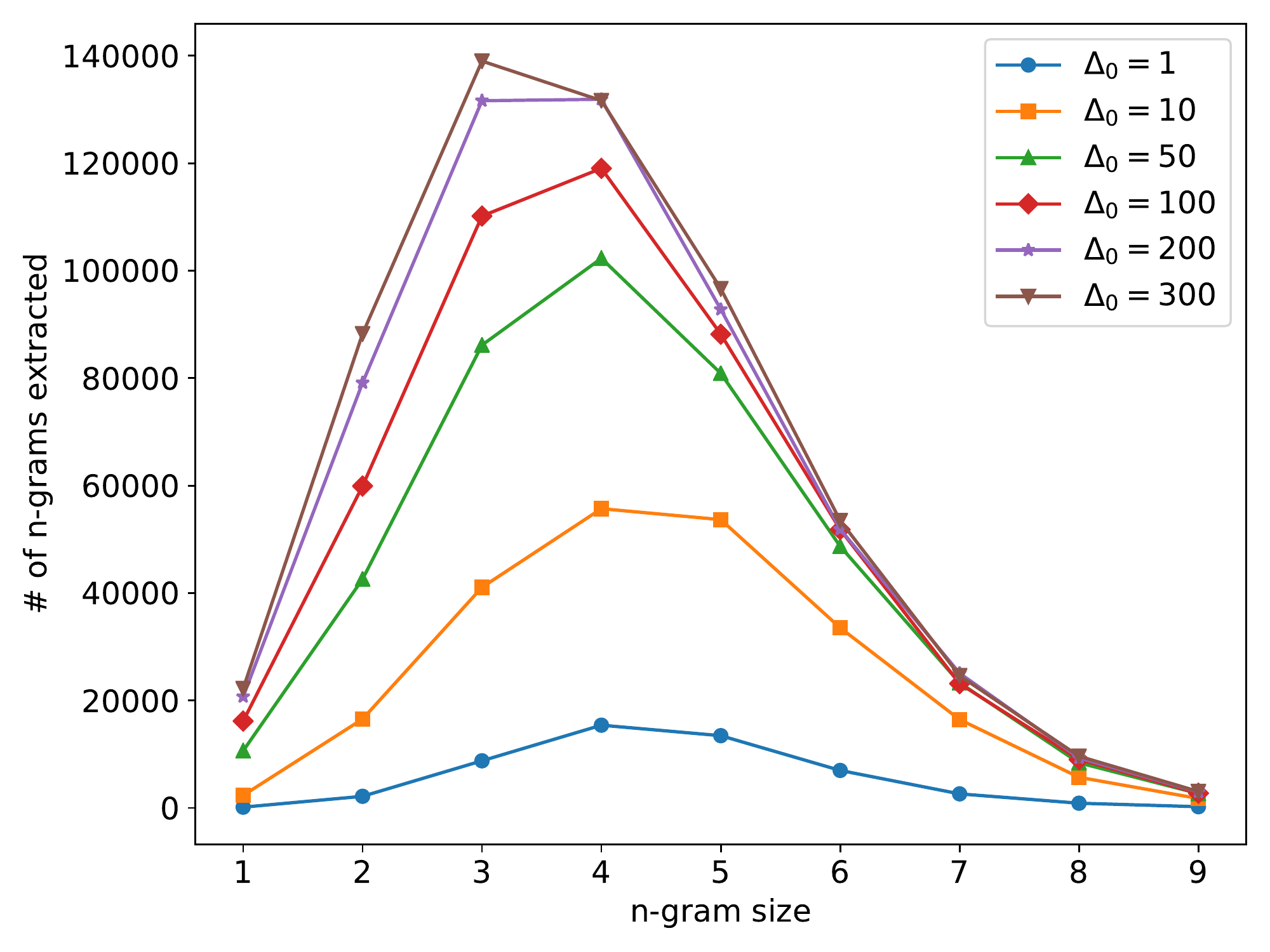}
\includegraphics[width=0.45\linewidth]{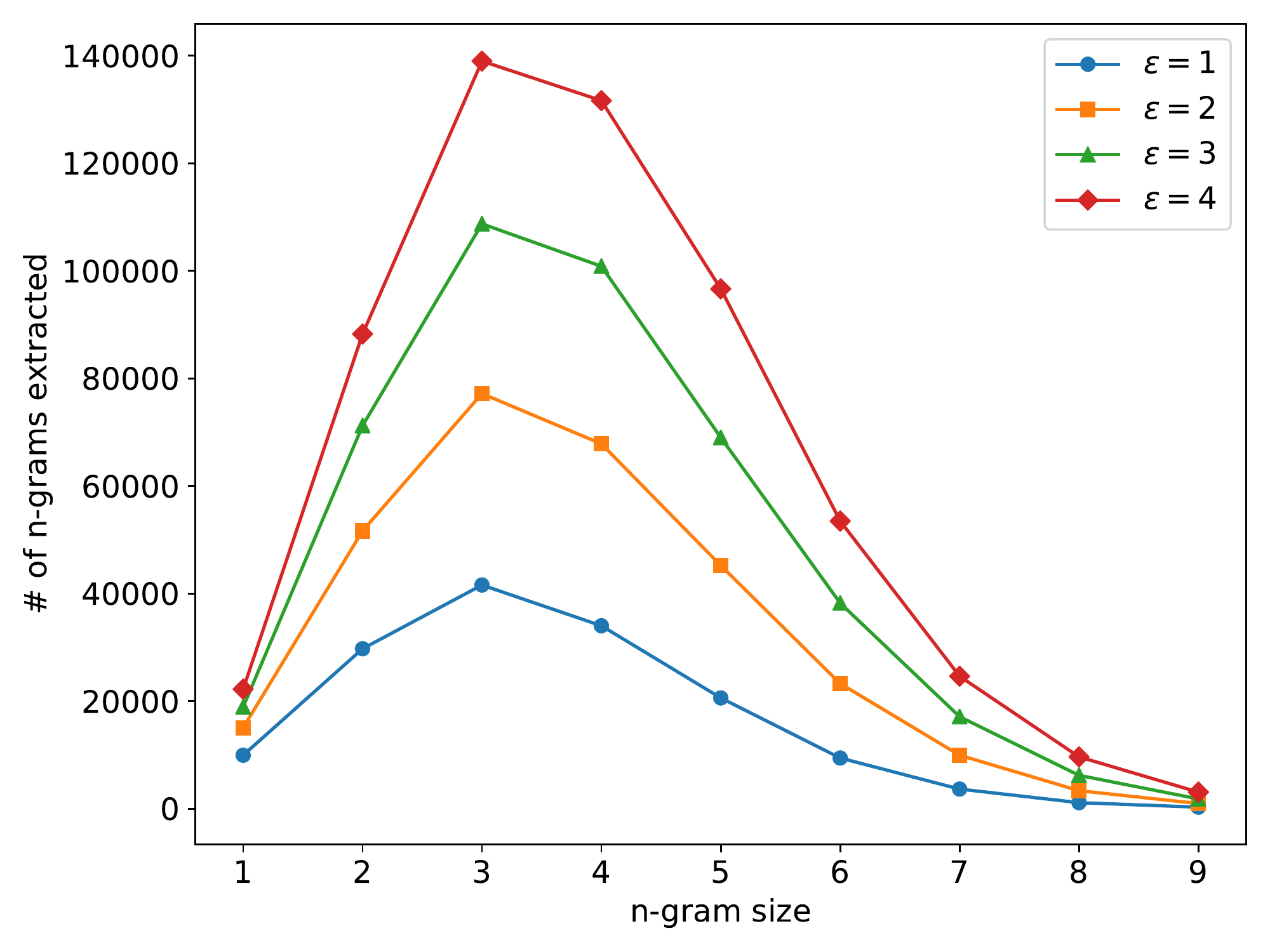}
\caption{The figures illustrate the effect of hyperparameters on DPNE algorithm: User contribution $\Delta_0$ (left), privacy parameter $\epsilon$ (right).}
\label{fig:hyperparameters_delta0_eps}
\end{figure}

\begin{figure}[h]
\centering
\includegraphics[width=0.45\linewidth]{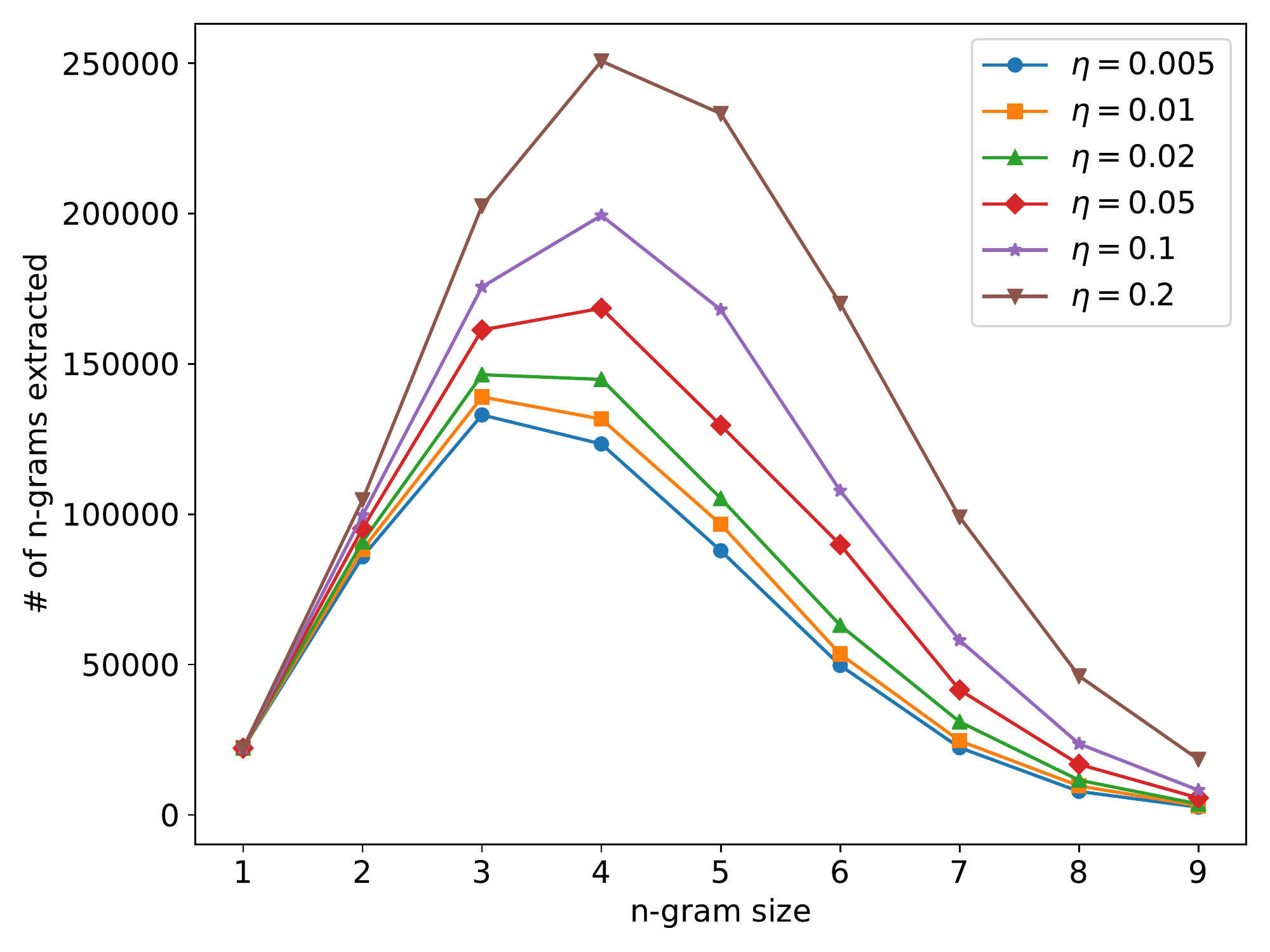}
\includegraphics[width=0.45\linewidth]{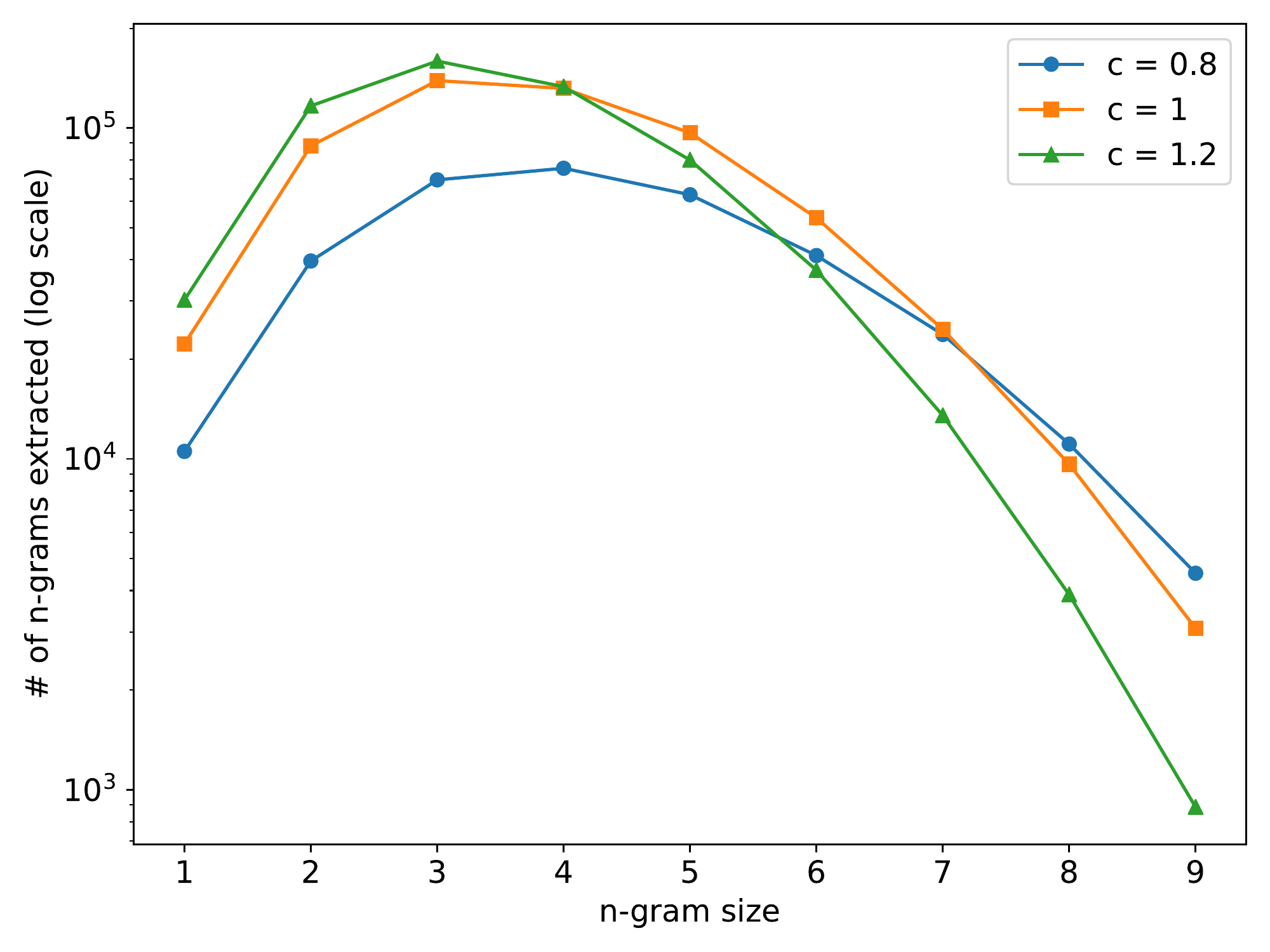}
\caption{The figures illustrate the effect of hyperparameters on DPNE algorithm:: Fraction of spurious $n$-grams $\eta$ (left), and privacy budgeting parameter $c$ (we set $\sigma_k = c\sigma_{k-1}$) (right).}
\label{fig:hyperparameters_eta_sigma}
\end{figure}

\subsubsection{Comparison to $K$-anonymity and DPSU}
\paragraph{$K$-anonymity:} Figure \ref{fig:kanon_pruning} shows what fraction of $n$-grams whose count is at least $K$ is extracted by our algorithm, which is similar to comparing against $K$-anonymity based benchmarks. Figure \ref{fig:kanon_pruning} shows that DPNE algorithms can extract most of the $n$-grams which have frequency at least 100 and have length at least 5. In particular, our new algorithm could output $90\%$ of the $n$-grams of length 6,7,8 that appear in the dataset at least 100 times.

\begin{figure}[h]
\centering
\includegraphics[width=0.45\linewidth]{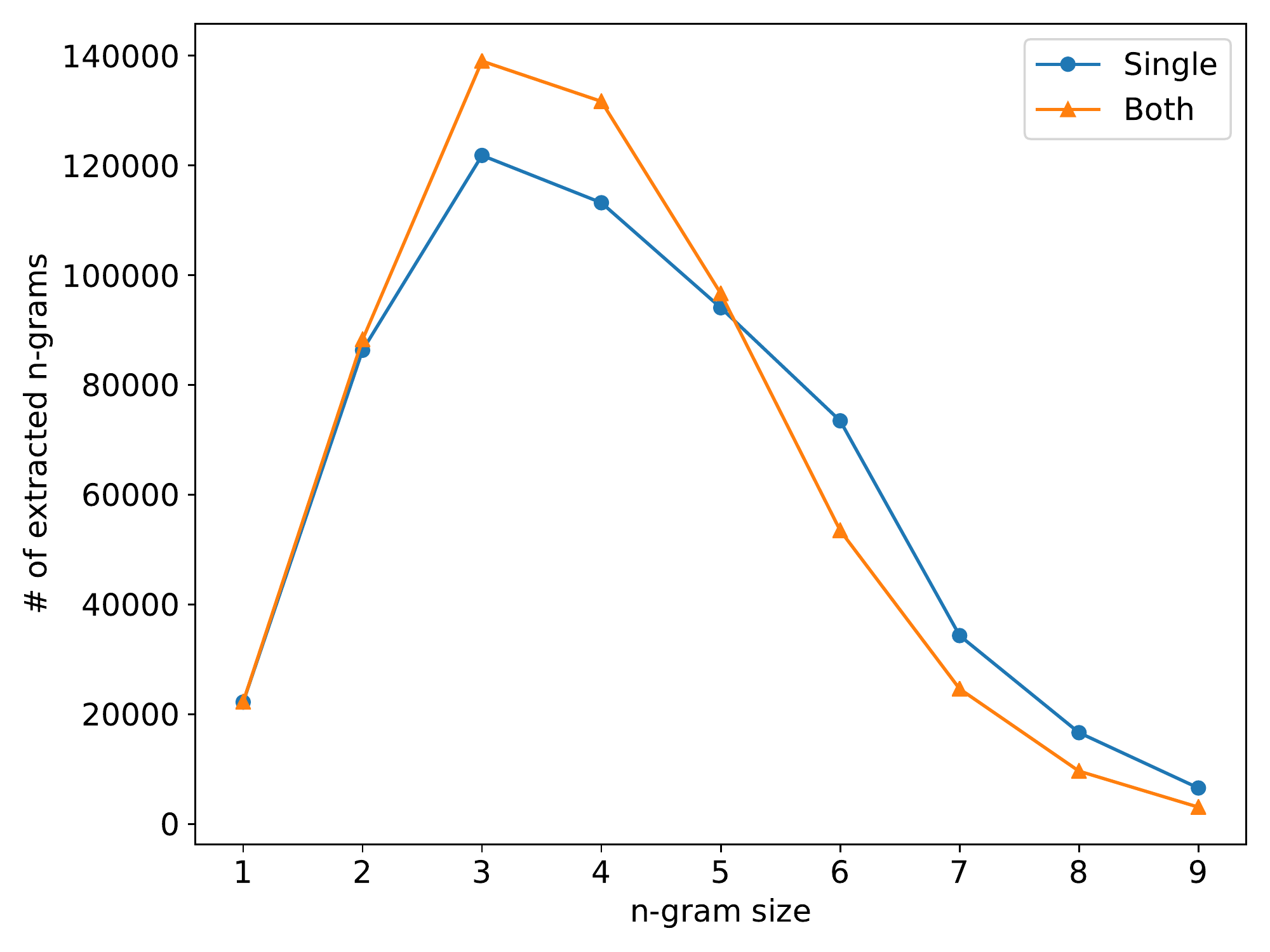}
\includegraphics[width=0.45\linewidth]{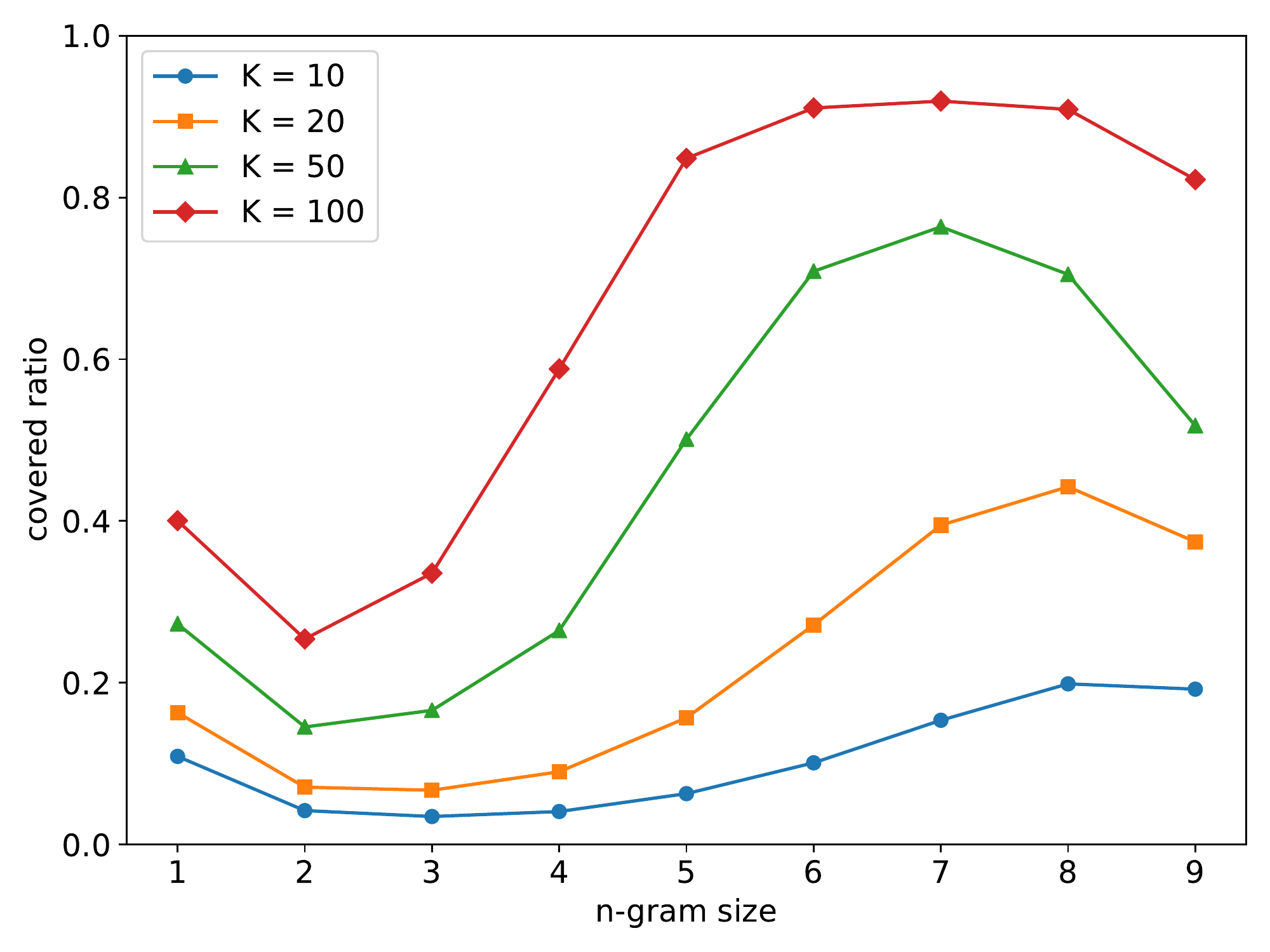}
\caption{The left figure illustrates the effect of different pruning rules $V_k=S_{k-1}\times S_1$ (single-side) or $V_k = (S_{k-1}\times S_1) \cap (S_1\times S_{k-1})$ (both-side). The right figure compares DPNE with $K$-anonymity, it shows how much fraction of $n$-grams with frequency $K$ are covered by the DPNE algorithm.}
\label{fig:kanon_pruning}
\end{figure}

\paragraph{DPSU:}
In Table~\ref{dpsuvsdpne} and Figure~\ref{fig:dpsucomp}, we compare our DPNE algorithm with the DPSU algorithm of \cite{gopi2020differentially}.
We implement DPSU algorithms to solve DPNE in three different ways: 
\begin{itemize}
\item In DPSU-all, we run DPSU algorithm with the maximum user contribution $T\Delta_0$ and treat all $n$-grams similarly irrespective of their length.
\item In DPSU-even, we allocate the privacy budget of $\approx \epsilon/\sqrt{T}$ (using advanced composition) to extract the $n$-grams of each particular length $k$ separately, for each $k \in [1,2,...,9]$. Maximum user contribution for each $k$ is set to $\Delta_0$.
\item Finally, in DPSU-single, we allocate all the privacy budget of $\epsilon$ towards extracting $n$-grams of a fixed length $k$, for each $k \in [1,2,...,9]$. Maximum user contribution for each $k$ is set to $\Delta_0$.
\end{itemize}

As we can see, except for smaller length $n$-grams, DPNE completely outperforms the DPSU algorithm. 
This is not surprising as DPSU algorithms do not exploit the rich structure inherent in $n$-grams.
The most interesting statistic to note is the last row of Table~\ref{dpsuvsdpne} : Here we see that our new algorithm beats the DPSU algorithm even when all the privacy budget is allocated to extracting a certain fixed sized $n$-grams. 
For example, the DPSU algorithm with a privacy budget of $(\epsilon = 4, \delta = 10^{-7})$ allocated to extracting {\em only} 8-grams managed to output only 329 $8$-grams, where as the DPNE algorithm with the same privacy budget spread across {\em all} the $n$-grams still could extract 9,019 8-grams. We also observe that the number of spurious $n$-grams output by our algorithm is always at most $\eta$-fraction of output as designed, so we omit the number of spurious $n$-grams from Table~\ref{dpsuvsdpne} for clarity.

\begin{table}[!h]
\centering
\caption{Comparison of DPNE and DPSU methods using parameters $\Delta_0=100, \epsilon=4, \delta=10^{-7}, \eta=0.01$. See Figure~\ref{fig:dpsucomp} for explanation of the different versions of DPSU.} \label{dpsuvsdpne}
\scalebox{0.9}{\begin{tabular}{ccccccccccc}
\toprule
 & 1 & 2 & 3 & 4 & 5 & 6 & 7 & 8 & 9 & total \\
\midrule
DPNE & 16,173 & 59,918 & 110,160 & 119,039 & 88,174 & 51,816 & 23,144 & 9,019 & 2,748 & 480,191  \\ 
DPSU-all & 18,010 & 54,937 & 38,283 & 10,990 & 1,978 & 352 & 71 & 16 & 4 & 124,637 \\ 
DPSU-even & 16,164 & 44,871 & 35,032 & 11,093 & 2,210 & 440 & 113 & 28 & 7 & 109,958 \\ 
DPSU-single & 44,635 & 194,131 & 201,901 & 92,486 & 25,451 & 5,906 & 1,385 & 329 & 92 & -\\
\bottomrule
\end{tabular}}
\end{table}


\subsection{Experiments on MSNBC Dataset}
Table~\ref{msnbc} reports comparison DPSU and DPNE algorithms on the MSNBC datasets. 
Although MSNBC dataset is significantly smaller than the Reddit dataset, behavior of the algorithms remain roughly the same.
\begin{table}[!h]
\centering
\caption{Comparison of DPSE and DPSU methods on MSNBC data ($\Delta_0=10, \epsilon=1, \delta=10^{-7}, \eta=0.01$)} \label{msnbc}
\scalebox{1.0}{\begin{tabular}{cccccccc}
\toprule
 & 1 & 2 & 3 & 4 & 5 & 6 & total \\
\midrule
DPNE & 17 & 254 & 1,273 & 1,954 & 2,221 & 2,020 & 7,739 \\ 
DPSU-all & 17 & 249 & 930 & 1,145 & 1,007 & 768 & 4,116 \\ 
DPSU-even & 17 & 246 & 899 & 1,107 & 996 & 766 & 4,031 \\
DPSU-single & 17 & 260 & 1,483 & 2,180 & 2,210 & 1,819 & - \\
\bottomrule
\end{tabular}}
\end{table}

\section{Conclusion}
In this paper, motivated by its applications to NLP problems, we initiated the study of DPNE problem, and proposed new algorithms for the problem.
We believe that our algorithmic framework can be extended or improved on multiple fronts: a more careful scheduling of privacy budget across different length $n$-grams, understanding the pruning strategies, etc. Given the ubiquitous nature of this problem in NLP, we hope that our work brings more attention to this problem.

\section*{Acknowledgement}
We would like to thank Robert Sim, Chris Quirk and Pankaj Gulhane for helpful discussions and encouraging us to work on this problem.

\bibliographystyle{alpha}
\bibliography{references}

\clearpage
\appendix

\section{A scalable algorithm for DPNE}
\label{sec:scalableDPNE}
In this section, we will present a more scalable and faster version of Algorithm~\ref{alg:DPNE}. The main observation is that we never actually need to explicitly calculate all the valid $k$-grams $V_k=(S_1 \times S_{k-1}) \cap (S_{k-1}\times S_1)$, since this set can be prohibitively big. Instead, we will use the fact that checking membership in $V_k$ is easy and we can also sample from $V_k$ relatively efficiently.

\begin{algorithm}[ht]
 \caption{Algorithm for differentially private ngram extraction (Faster and Scalable Version)}
 \label{alg:DPNE_fast_scalable}

   \KwIn{A set of $n$ users where each user $i$ has some subset $W^k_i$ of $k$-grams.\\
  $T$: maximum length of ngrams to be extracted\\
 $\Delta_1,\Delta_2,\dots,\Delta_T$: maximum contribution parameters\\
 $\rho_1,\rho_2,\dots,\rho_T$: Threshold parameters\\
 $\sigma_1,\sigma_2,\dots,\sigma_T$: Noise parameters.\\
 $p$: Sampling probability.}
   \KwOut{$S_1,S_2,\dots,S_T$ where $S_k$ is a set of $k$-grams}
   
   \tcp{Run DPSU to learn 1-grams}
   $S_1 \leftarrow$ Run Algorithm~\ref{alg:1-grams} (DPSU) using $\Delta_1,\rho_1,\sigma_1$ to get a set of $1$-grams\;
   $V_1 \leftarrow S_1$\;
   \tcp{Iteratively learn $k$-grams}
   \For{$k=2$ {\bfseries to} $T$}{
      $\widetilde{\#V_k} \leftarrow \mathsf{EstimateValidKgrams}(S_1,S_{k-1},p)$ \tcp*{Estimate valid $k$-grams}
      Set $\rho_k$ using $|S_{k-1}|,\widetilde{\#V_k}$\; 
      
      \tcp{Build a weighted histogram using Weighted Gaussian policy}
      $H_k\leftarrow $ Empty dictionary where any key which is inserted is initialized to $0$\;
      \For{$i=1$ {\bfseries to} $n$}{
         $U_i \leftarrow \mathsf{PruneInvalid}(W_i^k,S_1,S_{k-1})$             \tcp*{Prune invalid ngrams}
         \tcp{Limit user contributions}
         \If{$|U_i|>\Delta_k$}{
            $U_i \leftarrow$ Randomly choose $\Delta_k$ items from $U_i$\;
         }
         
         \For{$u$ in $U_i$}{
            $H_k[u] \leftarrow H_k[u] + \frac{1}{\sqrt{|U_i|}}$\;
         }
      }

      \tcp{Add noise to $H_k$ and output $k$-grams which cross the threshold $\rho_k$}
      $S_k = \{\} $ (empty set)\;
      
      \For{$u \in H_k$}{
         \If{$H_k[u]+ N(0, \sigma_k^2)>\rho_k$}{
            $S_k\leftarrow S_k\cup \{u\}$\;
         }
      }

      \tcp{Add spurious $k$-grams from $V_k\setminus \supp(H_k)$ with probability $\Pr[N(0,\sigma_k^2)>\rho_k]=\Phi(-\rho_k/\sigma_k)$}
      $B_k = \textsf{Binomial}(\widetilde{\#V_k}-|\supp(H_k)|,\Phi(-\rho_k/\sigma_k))$ \tcp*{\# Spurious $k$-grams we need to add to $S_k$}
      $Sp_k\leftarrow \{\}$ \tcp*{Spurious $k$-grams}
      \While{$|Sp_k|<B_k$}{
         Sample random $x\sim S_1$ and $w\sim S_{k-1}$ uniformly and independently\;
         Let $w=yz$ where $z\in S_1$\;
         \If{$xy\in S_{k-1}$ and $z\in S_1$ and $w\notin (Sp_k \cup \supp(H_k))$}{
               $Sp_k \leftarrow {w}\cup Sp_k$\;
         }
      }
      $S_k \leftarrow S_k \cup Sp_k$      \tcp*{Add the spurious $k$-grams to the $k$-grams extracted from users}
   }
   Output $S_1,S_2,\dots,S_T$\;

\end{algorithm}

\newcommand{\ceil}[1]{\lceil #1 \rceil}

\begin{algorithm}[ht]
 \caption{\textsf{EstimateValidKgrams}: Algorithm for estimating number of valid $k$-grams}
 \label{alg:estimate_valid_kgrams}

   \KwIn{$S_1$: Set of extracted 1-grams, $S_{k-1}$: Set of extracted $(k-1)$-grams,
 $p$: Sampling probability}
   \KwOut{An estimate $\widetilde{\#V_k}$ for the number of valid $k$-grams $|V_k|=|(S_1 \times S_{k-1}) \cap (S_{k-1}\times S_1)|$}
   
   $N\leftarrow \ceil{p|S_1||S_{k-1}|}$\;
   $count\leftarrow 0$\;
   \For{$i=1$ {\bfseries to} $N$}{
         Sample random $x\sim S_1$ and $w\sim S_{k-1}$ uniformly and independently\;
         Let $w=yz$ where $z\in S_1$\;
         \If{$xy\in S_{k-1}$ and $z\in S_1$}{
               $count=count+1$\;
         }
   }
   $\widetilde{\#V_k} \leftarrow \ceil{count/p}$\;
   Output $\widetilde{\#V_k}$\;
\end{algorithm}

\begin{algorithm}[ht]
 \caption{\textsf{CheckValidity}: Check validity of a $k$-gram}
 \label{alg:check_validity}

   \KwIn{$w$: Any $k$-gram with $k\ge 2$, $S_1$: Set of extracted 1-grams, $S_{k-1}$: Set of extracted $(k-1)$-grams}
   \KwOut{$\mathsf{True}$ if $w$ is valid i.e. $w\in V_k =(S_1 \times S_{k-1}) \cap (S_{k-1}\times S_1)$, else $\mathsf{False}$}
      
   Let $w=xyz$ where $x,z$ are 1-grams\;
   \eIf{$x,z\in S_1$ and $xy, yz\in S_{k-1}$}{
      Output $\mathsf{True}$\;
   }
   {
      Output $\mathsf{False}$\;
   }
\end{algorithm}

\begin{algorithm}[ht]
 \caption{\textsf{PruneInvalid}: Prune invalid $k$-grams from a given set of $k$-grams}
 \label{alg:prune_invalid}
   \KwIn{$W$: Any set of $k$-grams with $k\ge 2$, $S_1$: Set of extracted 1-grams, $S_{k-1}$: Set of extracted $(k-1)$-grams}
   \KwOut{$W\cup V_k$ where $V_k=(S_1 \times S_{k-1}) \cap (S_{k-1}\times S_1)$}
      $\widehat{W}\leftarrow \{\}$\;
   \For{$w$ in $W$}{
         \If{\textsf{CheckValidity}($w,S_1,S_{k-1}$)}{
               $\widehat{W} \leftarrow {w}\cup \widehat{W}$\;
         }
   }
   Output $\widehat{W}$\;
\end{algorithm}
\clearpage

\section{Differentially Private Set Union (DPSU) Algorithm}
\label{sec:DPSU}
\begin{algorithm}[H]
 \caption{Algorithm for extracting $1$-grams using DPSU}
 \label{alg:1-grams}

   \KwIn{A set of $n$ users where each user $i$ has some subset $W^1_i$ of $1$-grams.\\
    $\Delta_1$: maximum contribution parameter\\
    $\rho_1$: Threshold parameter\\
    $\sigma_1$: Noise parameter}
   \KwOut{$S_1$, a set of $1$-grams}
   \For{$i=1$ {\bfseries to} $n$}{
      $U_i \leftarrow W^1_i$\;
      \If{$|U_i|>\Delta_1$}{
         $U_i \leftarrow$ Randomly choose $\Delta_1$ items from $W_i$\;
      }
      \For{$u$ in $U_i$}{
         $H_1[u] \leftarrow H_1[u] + \frac{1}{\sqrt{|U_i|}}$\;
      }
   }
   
   $S_1 = \{\} $ \tcp*{empty set}
   $H_1\leftarrow $ Empty dictionary where any key which is inserted is initialized to $0$\;
   \For{$u \in H_1$}{
      \If{$H_1[u]+N(0, \sigma_1^2)>\rho_1$}{
         $S_1\leftarrow S_1\cup \{u\}$\;
      }
   }
   Output $S_1$\;

\end{algorithm}


\end{document}